\newcommand{\exact}{Expectation-Aware}
\newtheorem{theorem}{Theorem}
\theoremstyle{definition}
\newtheorem{defn}{Definition} 
\def\BState{\State\hskip-\ALG@thistlm}
\begin{document}
%
\newcommand{\sscomment}[1]{\textcolor{red}{[Sid] #1}}
\newcommand*{\affaddr}[1]{#1} 
\newcommand*{\affmark}[1][*]{\textsuperscript{#1}}

\title{\exact~Planning: A Unifying Framework for Synthesizing \\and Executing Self-Explaining Plans for Human-Aware Planning}
%
%
%
%
%
%
%
%
\author{
Sarath Sreedharan\affmark[1],
Tathagata Chakraborti\affmark[2],
Christian Muise\affmark[2]
\and Subbarao Kambhampati\affmark[1]\\
\affaddr{\normalfont{\affmark[1]CIDSE,
Arizona State University, Tempe, AZ 85281 USA}\\
\affaddr{\affmark[2]IBM Research AI,
Cambridge, MA, USA}\\
ssreedh3@asu.edu, tathagata.chakraborti1@ibm.com, christian.muise@ibm.com, rao@asu.edu
}}

\maketitle
\begin{abstract}
In this work, we present a new planning formalism called 
{\em \exact~}planning for decision making with humans in the loop
where the human's expectations about an agent may differ 
from the agent's own model. 
We show how this formulation allows agents to not only leverage existing 
strategies for handling model differences but can also exhibit novel behaviors 
that are generated through the combination of these different strategies.
Our formulation also reveals a deep connection to existing 
approaches in epistemic planning.
Specifically, we show how we can leverage classical planning compilations 
for epistemic planning to solve {\em \exact~}planning problems.
To the best of our knowledge, the proposed formulation is the first complete 
solution to decision-making in the presence of diverging user expectations that 
is amenable to a classical planning compilation while successfully combining 
previous works on explanation and explicability.
We empirically show how our approach provides a computational advantage over existing approximate approaches that unnecessarily try to search in
the space of models while also failing to facilitate the full gamut of
behaviors enabled by our framework.
\end{abstract}

\section{Introduction}
One of the greatest challenges in designing agents that can work with humans
is in making sure that the agents are capable of acting in a manner that is interpretable to the humans.
A major barrier towards achieving fluent collaboration occurs when
the human's expectations regarding the agent's capabilities and preferences 
differ from reality.
Such knowledge asymmetry implies that 
even in cases where the agent is coming up with the best decisions it can, 
the human would not be able to agree to the quality of that plan.
Previous works have proposed two strategies to handle this: (1) provide information that reconciles the model differences, either through explicit communication \cite{explain} or by performing actions that convey robots capabilities \cite{kwon2018expressing} (2) or by acting in a manner that aligns with human expectations \cite{exp-yu}.

While each of these are reasonable strategies on their own, for the agent to be truly effective we would want it to be capable of combining the strengths of each. Unfortunately, current approaches to combining these methods (for example \cite{balance}) have generally fallen short in the kind of behavior they can generate. We are unaware of any existing works that is truly able to capitalize on the agent's ability to effect and leverage human expectations through explicit communication and behavior.

Our formulation, on the other hand, leads to what may be best described as {\em self-explaining plans} with the plan now containing actions that are responsible for explaining the rest of the plan. 
Such explanations may be delivered by purely communicative actions (thereby capturing \cite{explain}) that are meant to update the human's mental model or task level actions that could also have epistemic side effects (thereby allowing for actions of the type studied in \cite{kwon2018expressing}). Additionally, the framework allows for selecting plans that aligns with human expectations whenever possible.
Our contributions are thus two-fold:

\begin{itemize}
\item[-] 
We present the first unification of various threads of planning with differing human expectation: including acting in-accordance with the human expectation (explicability), bridging model asymmetry through implicit (epistemic effects of plan execution on the mental model) and explicit communication (explanations).
\item[-] 
We show how our formulation is complete (unlike previous approaches) while also lending itself to a compilation to classical planning problems. The latter provides significant computational advantage with respect to existing algorithms in this space that search directly in the space of models.
\end{itemize}


\section{Background}
\label{backg}
We will assume that the planning models used by both the human and the robot are represented as classical planning problems 
described by the tuple $\mathcal{M} = \langle F, A, I, G, C\rangle$ \cite{geffner2013concise}, 
where $F$ is the set of propositional fluents used to describe the planning task states, $A$ the set of actions, $I$ the initial state, $G$ the goal.
Each action $a \in A$ is further defined as a tuple
$a = \langle \textrm{prec}^a, \textrm{adds}^a, \textrm{dels}^a\rangle$, where $\textrm{prec}^a$ is its preconditions, and $\textrm{adds}^a$ and $\textrm{dels}^a$ are its add and delete effects. 
The precondition is a propositional formula defined over state fluents such that an action $a$ can only be executed in a state $S$ if $S \models \textrm{prec}^a$. 
The effects are generally of the form $c \rightarrow e$, where the antecedent represents the condition under which the effect $e$ should be applied (where the fluent corresponding to $e$ is set to true in the state if $c \rightarrow e$ is part of the add effects, and if it is part of the delete it is set to false). 

Each action is associated with a cost $C(a)$. 
A plan or a sequence of actions $\pi = \langle a_1, ..., a_n\rangle$ 
is a valid solution of a planning problem $\mathcal{M}$ if 
$\pi(I)\models_{\mathcal{M}} G$ and $G \subseteq \pi(I)$. 
The cost of a plan is the sum of individual action costs, i.e. 
$C(\pi) = \sum_{i=1}^n C(a_i)$. 
A plan $\pi$ is said to be optimal if there exist no valid plan $\pi'$ such that $C(\pi') < C_{\mathcal{M}}(\pi)$. 
We will use $\Pi^*_{\mathcal{M}}$ to represent the set of all optimal plans 
for $\mathcal{M}$.


The particular setting we are interested in involves an agent that 
makes decisions using its own model $\mathcal{M}_R = \langle F,A_R,I_R,G_R,C\rangle$ while a human
evaluates the plan using their mental model $\mathcal{M}_H = \langle F,A_H,I_H,G_H,C\rangle$.
For ease of discussion, we concentrate on the specific case where conditions for actions only consist of conjunction of positive literals and the agents have the same cost.
While the human is under the assumption that $M_H$ is an accurate representation of the task at hand, the model could be different from $\mathcal{M}_R$ in terms of action definitions, the initial state, and the goal. 
This difference means that plans generated for the model $\mathcal{M}_R$ may have different properties in the mental model $\mathcal{M}_H$.
For example, a plan $\pi^*$ that is optimal in $\mathcal{M}_R$ may be considered suboptimal or even in-executable by the human. 

When model asymmetry becomes a source of confusion for the observer, 
explaining the plan must involve bridging this gap.
One of the ideas proposed by earlier works in {\em explanations as model reconciliation} (c.f \cite{explain}) is that given a specific plan, the agent does not need to achieve complete reconciliation.  
Rather they can focus on providing enough information that the current plan has required properties (such as executability, optimality, etc.).
When the agent is aware of $M_H$, it can use this knowledge to figure out the minimal (where minimality of explanations defined with respect to an explanation cost $C_E$) information it needs to provide to achieve the required properties. For example, the problem of identifying explanations for establishing optimality of a given plan $\pi$ thus becomes:
{
\[\textrm{argmin}_{\mathcal{E}}(C_E(\mathcal{E}))\]
\[\textrm{such that } \pi \in \Pi^*_{\mathcal{M}_H + \mathcal{E}}\]
}
\noindent where $\mathcal{E}$ is a set of model information about the agent to be provided to the user as explanation (this could include truth value of fluents in initial state, presence or absence of literals in preconditions/effects, etc.) and $\mathcal{M}_H + \mathcal{E}$ is the updated user model after the explanation. 
Note that our use of `+' operator does not imply that all model reconciliation explanations are additive as $\mathcal{E}$ could include information aimed at correcting user's misconceptions about additional effects or even additional actions that the robot is capable of. We will follow the conventions set by \cite{explain} and will focus on three main types of model updates:

\vspace{5pt}
\noindent(1) Turn a fluent p true or false in initial state (represented by the operator $\{\textrm{add}/\textrm{remove}\}\textrm{-p-from-I}$)

\vspace{5pt}
\noindent(2) Add or remove a fluent p from the precondition (also add or delete effect) list of an action a (represented by the operator $\{\textrm{add}/\textrm{remove}\}\textrm{-p-from-\{prec/adds/dels\}-of-a}$)

\vspace{5pt}
\noindent(3) Add or remove a fluent p from the goal list (represented by the operator $\{\textrm{add}/\textrm{remove}\}\textrm{-p-from-G}$)

\vspace{5pt}

This focuses on cases where the agent is explaining its plan
to the human after generating it. 
The flip side would be to try generating plans that are tailored for the human model. This is referred to as explicable planning and the most basic version of this problem can be formulated as:
\[\textrm{argmin}_{\pi}(C(\pi))\]
\[\textrm{Such that } \pi(I_R) \models G_R \textrm{ and } \pi(I_H) \models G_H\]
This computes a plan that is executable in the agent and human mental model with the lowest cost. 
Our approach is capable of both explaining its plans as well as choosing plans that align with the user expectations. Before delving into details of the formulation, let us introduce the search and rescue domain (a variation from \cite{balance}) which we will use as an illustrative example for the rest of the paper.

\section{Running Example: Search \& Rescue}

\label{usar}
\begin{figure*}[!th]
\centering
\includegraphics[scale=0.5]{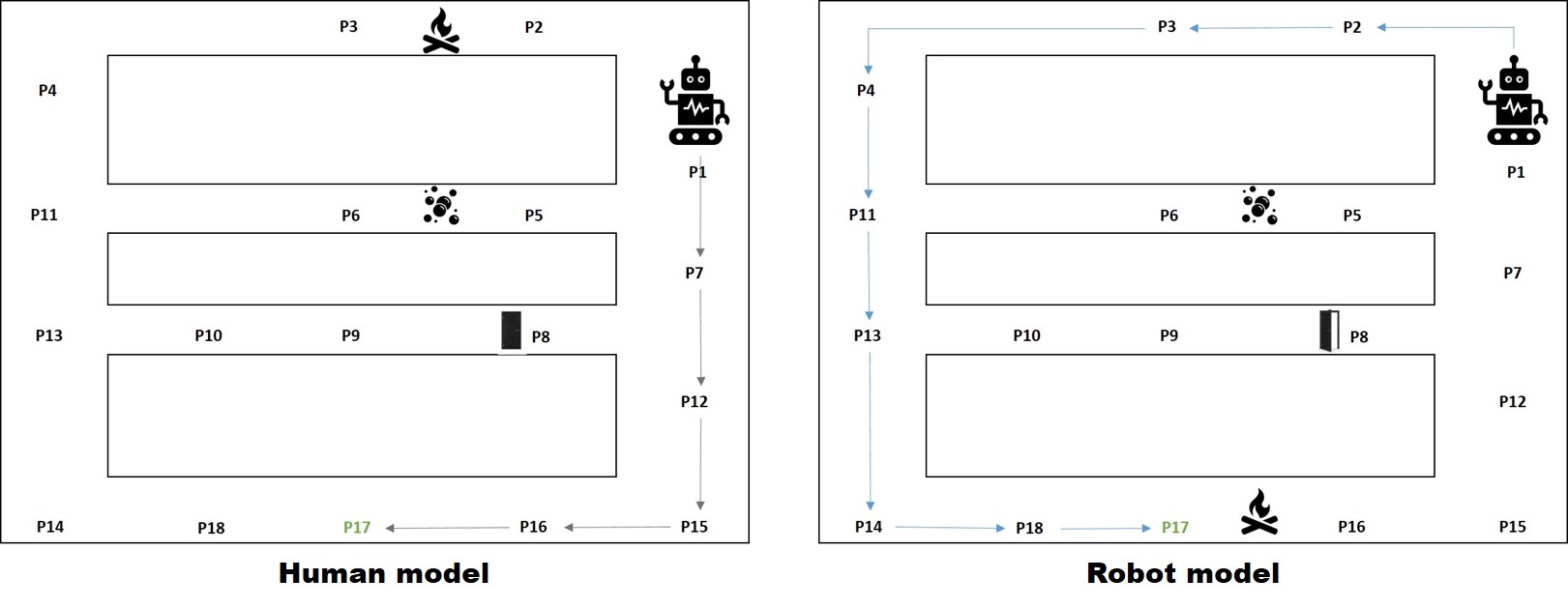}
    \caption{\small { The basic robot and human maps. The robot starts at P1 and needs to go to P17. The human incorrectly believes that the path from P16 to P17 is clear and the one from P2 to P3 is blocked by fire. Both agents know that there are some movable rubble between p5 and P6 that can be cleared with the help of a costly clear\_passage action. Finally in the human model the door at P8 is locked while it is unlocked in the robot model and robot can't open unlocked doors.}}
\label{fig:1}
\end{figure*}

A typical Urban Search and Rescue (USAR) 
scenario consists of an autonomous robot deployed to a disaster scene with an external commander who is monitoring its activities. 
Both agents start with the same model of the world (i.e the map of the building before the disaster) but the models diverge over time since the robot, being internal to the scene, has access to updated information about the building.
This model divergence could lead to the commander incorrectly evaluating valid plans from the robot as sub-optimal or even unsafe. One way to satisfy the commander would be to communicate or explain changes to the model that led the robot to come up with those plans in the first place.


Figure \ref{fig:1} illustrates a scenario where the robot needs to travel from P1 to its goal at P17. 
The optimal plan expected by the commander is highlighted in grey in their map
and involves the robot moving through waypoint P7 and follow that corridor to go to P15 and then finally to P16. The robot knows that it should in fact be moving to P2 -- its optimal plan is highlighted in blue. 
This disagreement rises from the fact that the human incorrectly believes that the path from P16 to P17 is clear while that from P2 to P3 is blocked.

If the robot were to follow the explanation scheme established in \cite{explain}, it would stick to its own plan and provide the following explanation:

{\small \begin{lstlisting}[mathescape]
> remove$\textrm{-}$(clear p16 p17)$\textrm{-}$from$\textrm{-}$I
    (i.e. Path from P16 to P17 is blocked)
> add$\textrm{-}$(clear p2 p3)$\textrm{-}$to$\textrm{-}$I 
    (i.e. Path from P2 to P3 is clear)
\end{lstlisting}}

If the robot were to stick to a purely explicable plan \cite{exp-yu} then it can choose to use the passage through P5 and P6 after performing a costly clear\_passage action (this plan is not optimal in either of the models).

\section{Expectation-Aware Planning}
\label{ea-plan}
We call the task of computing plans with the expectations of an external agent: \exact~planning. 

\begin{defn}
{\em A {\em \exact~ planning problem} (EA) is defined by the tuple $\Psi = \langle \mathcal{M}_R, \mathcal{M}_H\rangle$, where $\mathcal{M}_R$ is the robot model and $\mathcal{M}_H$ is the model ascribed to the robot by an observer. A solution to the problem $\Psi$ is then given by the tuple $\langle \mathcal{E}^{\Psi}, \pi_{\Psi}\rangle$, where $\mathcal{E}^{\Psi}$ is set of model updates for $\mathcal{M}_H$ consistent with $\mathcal{M}_R$ and $\pi_{\Psi}$ a plan. The given solution is considered valid iff
$\pi_{\Psi} (I_{\mathcal{M}_R}) \models_{\mathcal{M}_R} G_{\mathcal{M}_R}$
(it is valid in the agent model) as well as $\pi_{\Psi} (I_{\mathcal{M}_H + \mathcal{E}^{\Psi})}) \models_{\mathcal{M}_H + \mathcal{E}^{\Psi}} G_{\mathcal{M}_H + \mathcal{E}^{\Psi}}$ (valid in the updated mental model).}
\end{defn}
This means that a solution to an expectation aware problem may consist of model information to be provided to the observer along with the plan that needs to be followed by the agent. 
In the USAR example, the optimal robot plan along with the two initial state updates, and the explicable plan with no model updates, would both be valid solutions.

While at first glance, the need to keep track of both models and identifying the model changes may make the problem of solving EA planning problems considerably harder than the original decision making problem. 
However, we show that, in fact, finding a valid solution in this setting is no harder than identifying valid plans for classical planning problems:

\begin{theorem}
For a given EA problem $\Psi = \langle \mathcal{M}_R, \mathcal{M}_H\rangle$, where both $ \mathcal{M}_R$ and $\mathcal{M}_H$ are represented as classical planning problems, the problem of identifying a valid solution for $\Psi$ is PSPACE-complete.
\end{theorem}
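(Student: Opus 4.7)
The plan is to establish PSPACE-completeness in the two standard ways: hardness by reducing from classical plan existence, and membership by exhibiting a PSPACE decision procedure.

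For hardness, I would reduce from plan-existence for classical STRIPS-style planning, which is well-known to be PSPACE-complete. Given an arbitrary classical planning instance $\mathcal{M}$, form the EA instance $\Psi = \langle \mathcal{M}_R, \mathcal{M}_H \rangle$ with $\mathcal{M}_R = \mathcal{M}_H = \mathcal{M}$. With the two models coinciding, the empty update set is trivially consistent with $\mathcal{M}_R$, and the two validity conditions in the definition of a solution collapse to $\pi(I)\models G$ in $\mathcal{M}$. Conversely, any valid EA solution to $\Psi$ furnishes a plan for $\mathcal{M}$ directly via the agent-model validity requirement, so the reduction is correct; it is clearly polynomial.

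For membership, my key observation is that a valid solution to $\Psi$ exists if and only if $\mathcal{M}_R$ itself admits a plan. The ``only if'' direction is immediate because any EA solution must contain a plan valid in $\mathcal{M}_R$. For the ``if'' direction I would construct $\mathcal{E}^{\Psi}$ as the ``full reconciliation'' update set obtained by composing, fluent-by-fluent and action-by-action, the three kinds of update operators enumerated in the background section -- initial-state edits, precondition/add-effect/delete-effect edits, and goal edits -- needed to rewrite $\mathcal{M}_H$ into $\mathcal{M}_R$. Each atomic update is consistent with $\mathcal{M}_R$ by construction (it asserts a fact that actually holds in $\mathcal{M}_R$), the composite set has size polynomial in $|\mathcal{M}_R|+|\mathcal{M}_H|$, and after applying it we have $\mathcal{M}_H + \mathcal{E}^{\Psi} = \mathcal{M}_R$. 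Hence any plan valid in $\mathcal{M}_R$ automatically satisfies the second validity condition. Deciding EA therefore reduces to deciding plan existence in $\mathcal{M}_R$, which is in PSPACE; a witnessing plan can be found in nondeterministic polynomial space and derandomised via Savitch's theorem.

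The one subtlety I expect to be the main obstacle is formalising the consistency clause on $\mathcal{E}^{\Psi}$ and verifying that a composition of polynomially many atomic updates of the three listed types is still a legitimate $\mathcal{E}^{\Psi}$ under the definition, rather than, say, a multi-step belief revision in which a later update could contradict an earlier one. Once this bookkeeping is settled -- showing that each per-fluent/per-action discrepancy between $\mathcal{M}_H$ and $\mathcal{M}_R$ is separately expressible and that the updates commute -- combining the hardness reduction with the membership argument yields PSPACE-completeness.
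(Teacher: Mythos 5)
Your proposal is correct, and the hardness half is exactly the paper's argument (collapse the two models into one, so EA degenerates to classical plan existence). Where you genuinely diverge is the membership half. The paper proves membership by exhibiting a sound and complete compilation of the EA problem into a classical planning problem with conditional effects and negative/disjunctive preconditions that is linear in the size of the input models, and then invoking Bylander's result that plan existence for that class remains in PSPACE; the compilation itself is the paper's main construction (its augmented model with belief fluents and explanatory actions) and is reused later for actually solving EA problems and for the cost/optimality results. Your route instead observes that, for mere validity, the human model is irrelevant: since the update vocabulary can edit initial state, goals, preconditions and effects (and the two models share fluents and costs), the ``full reconciliation'' set $\mathcal{E}^{\Psi}$ with $\mathcal{M}_H + \mathcal{E}^{\Psi} = \mathcal{M}_R$ is always available and consistent with $\mathcal{M}_R$, so a valid EA solution exists iff $\mathcal{M}_R$ admits a plan. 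This is a cleaner and more elementary complexity argument---it needs no compilation machinery and makes transparent why the theorem's claim (``no harder than classical planning'') holds. What it does not buy you is anything beyond the decision problem: full reconciliation says nothing about minimal explanations, balanced plans, or optimality in the updated mental model, which is precisely what the paper's compilation is built to handle and why the paper routes its membership proof through that construction. Your flagged subtlety about composing atomic updates is real but benign in this setting: the updates are assertions of facts of $\mathcal{M}_R$, each per-fluent/per-action discrepancy is separately expressible in the three listed operator families, and they cannot conflict since all are anchored to the same ground-truth model.
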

\begin{proof}[Proof Sketch]
The PSPACE-hardness of an EA is easy to establish since the problem of planning with just agent model can be mapped to a specific EA planning scenario where both agent and user have the same model. We can establish membership in PSPACE class by showing that there exist a sound and complete compilation from EA to a planning problem with conditional effects and disjunctive/negative preconditions that is linear in size of the original planning problems. 
We can then follow the same proof specified in \cite{bylander} to show that the problem of plan existence is still in PSPACE for this class of planning problems. 
The exact details of the compilation along with the soundness and completeness proofs will be discussed in Section \ref{compl}.
\end{proof}


\subsection{Self-Explaining Plans as Solutions to EA}
\label{compl}
One of the main challenges of compiling an EA problem to a traditional planning problems is to allow for a way to handle the identification of model updates and to account for the effect of these model updates on the user's expectation. 
A good way to go about this would be by acknowledging that that if the observer is actually watching the agent executing a plan, these explanations can delivered through and hence modeled as communicative or {\em explanatory actions}. 
These actions can, in fact, be seen as actions with epistemic effects in as much as they are aimed towards modifying the human mental model. 
This means that a solution to an EA planning problem can be seen as {\em self-explaining plans}, in the sense that some of the actions in the plan are aimed at helping people better understand the rest of the plan.

This puts EA planning squarely in the purview of epistemic planning, but the additional constraints enforced by the setting allow us to leverage relatively efficient methods to solve the problem at hand. These constraints include facts like: all epistemic actions are public, modal depth is restricted to one, modal operators only applied to literals, for any literal the observer believes it to be true or false and the robot is fully aware of all of the observer beliefs. 

Model updates in the form of epistemic effects of communication actions also open up the possibility of other actions having epistemic {\em side effects}. The definition of EA makes no claims as to how the model update information is delivered. It is quite possible that actions that the agent is performing to achieve the goal (henceforth referred to as task-level actions to differentiate it from primary epistemic communication actions) itself could have epistemic side-effects. This is something people leverage to simplify communication -- e.g. one might avoid providing prior description of some skill they are about to use when they can simply demonstrate it. So one of our goals with the compilation is to allow for such epistemic side effects; a factor that has previously been not considered in any of the earlier works. 
This consideration also enables us to also capture task level constraints that may be imposed on the communication actions.

\subsubsection{Compilation to classical planning.}
To support such self-explaining plans, we adopt a formulation that is similar to the one introduced in \cite{muise-epist} to compile reasoning about epistemic states into a classical planning problem. In our setting, each explanatory action can be viewed as an action with epistemic effects. One interesting distinction to make here is that the mental model now not only includes the human's belief about the task state but also their belief about the robot's model. This means that the planning model will need to separately keep track of (1) the current robot state, (2) the human's belief regarding the current state, (3) how actions would effect each of these (as humans may have differing expectations about the effects of each action) and (4) how those expectations change with explanations.

Given the model reconciliation planning problem $\Psi = \langle \mathcal{M}_{R}, \mathcal{M}_{H}\rangle$, we will generate a new planning model $\mathcal{M}_{\Psi} = \langle F_{\Psi},A_{\Psi},I_{\Psi},G_{\Psi}, C_{\Psi} \rangle$ as follows $F_{\Psi} = F \cup F_{\mathcal{B}} \cup F_{\mu} \cup \{ \mathcal{G}, \mathcal{I}\} $, where $F_\mathcal{B}$ is a set of new fluents that will be used to capture the human's belief about the task state and $F_{\mu}$ is a set of meta fluents that we will use to capture the effects of explanatory actions and $\mathcal{G}$ and $\mathcal{I}$ are special goal and initial state propositions. 
We will use the notation $\mathcal{B}(p)$ to capture the human's belief about the fluent $p$. We are able to use a single fluent to capture the human belief for each (as opposed to introducing two new fluents $\mathcal{B}(p)$ and $\mathcal{B}(\neg p)$) as we are specifically dealing with a scenario where the human's belief about the robot model is fully known and human either believes each of the fluent to be true or false. In this case, we also do not require any of the additional rules that were employed in \cite{muise-epist} to ensure that the state captures the deductive closure of the agent beliefs.


$F_{\mu}$ will contain an element for every part of the human model that can be changed by the robot through explanations. A meta fluent corresponding to a literal $\phi$ from the precondition of an action $a$ takes the form of $\mu^{+}({\phi}^{\textrm{prec}^a})$, where the superscript $+$ refers to the fact that the clause $\phi$ is part the precondition of the action $a$ in the robot model (for cases where the fluent represents an incorrect human belief we will be using the superscript $-$).

For every action $a = \langle \textrm{prec}^a, \textrm{adds}^a, \textrm{dels}^a\rangle \in A_R$ and its human counterpart $a_h = \langle \textrm{prec}^{a_h}, \textrm{adds}^{a_h}, \textrm{dels}^{a_h} \rangle \in A_H$, we define a new action $a_{\Psi} = \langle \textrm{prec}^{a_{\Psi}}, \textrm{adds}^{a_{\Psi}}, \textrm{dels}^{a_{\Psi}}\rangle  \in \mathcal{M}_{\Psi}$ whose precondition is given as:
\begin{multline*}
\textrm{prec}^{a_{\Psi}} = \textrm{prec}^{a_R} \cup \{\mu^{+}({\phi}^{\textrm{prec}^a}) \rightarrow \mathcal{B}(\phi) | \phi \in \textrm{prec}^{a_R}\setminus \textrm{prec}^{a_H}\} \\ \cup 
\{\mu^{-}({\phi}^{\textrm{prec}^a}) \rightarrow \mathcal{B}(\phi) | \phi \in \textrm{prec}^{a_H}\setminus \textrm{prec}^{a_R}\}
\\\cup \{\mathcal{B}(\phi) | \phi \in \textrm{prec}^{a_H}\cap \textrm{prec}^{a_R}\}
\end{multline*}
The important point to note here is that at any given state, an action in the augmented model is only applicable if the action is executable in robot model and the human believes the action to be executable. Unlike the executability of the action in the robot model (captured through unconditional preconditions) the human's beliefs about the action executability can be manipulated by turning the meta fluents on and off.
The effects of these actions can also be defined similarly by conditioning them on the relevant meta fluent.
In addition to these task level actions (represented by the set $A_{\tau}$), we can also define explanatory actions ($A_{\mu}$) that either add $\mu^+(*)$ fluents or delete $\mu^-(*)$. 

Special actions $a_{0}$ and $a_{\infty}$ that are responsible for setting all the initial state conditions true and checking the goal conditions are also added into the domain model. $a_{0}$ has a single precondition that checks for $\mathcal{I}$ and has the following add and delete effects:
\begin{multline*}
\textrm{adds}^{a_0} = \{\top\rightarrow p\mid p \in I_{R}\} \cup \{\top\rightarrow \mathcal{B}(p)\mid p \in I_{H}\} \\ \cup \{\top\rightarrow p \mid p \in F_{\mu^{-}} \}
\end{multline*}
\indent$\textrm{dels}^{a_0} = \{\mathcal{I}\}$

\vspace{5pt}
\noindent where $F_{\mu^{-}}$ is the subset of $F_{\mu}$ that consists of all the fluents of the form $\mu^{-}(*)$. Similarly, the precondition of action $a_{\infty}$ is set using the original goal and adds the proposition $\mathcal{G}$.
\begin{multline*}
\textrm{prec}^{a_{\infty}} = G_{R} \cup \{\mu^{+}({p}^{G}) \rightarrow \mathcal{B}(p) \mid p \in G_{R}\setminus G_{H}\} \cup \\
\{ \mu^{-}(p^G) \rightarrow \mathcal{B}(p) \mid p \in G_{H}\setminus G_{R}\} \cup
 \{\mathcal{B}(p) \mid G_{H}\cap G_{R}\}
\end{multline*}

Finally the new initial state and the goal specification becomes $I_{\mathcal{E}} = \{\mathcal{I}\}$ and $G_{\mathcal{E}} = \{\mathcal{G}\}$ respectively. To see how such a compilation would look in practice, consider an action {\small$\textsf{(move\_from p1 p2)}$} that allows the robot to move from point {\small$\textsf{p1}$} to {\small$\textsf{p2}$} only if the path is clear. The action is defined as follows in the robot model:
{\small \begin{lstlisting}[mathescape]
(:action move_from_p1_p2
    :precondition (and (at_p1) (clear_p1_p2))
    :effect (and (not (at_p1)) (at_p2) ))
\end{lstlisting}}

Let us assume the human is aware of this action but does not care about the status of the path (as they assume the robot can move through any debris filled path). In this case, the corresponding action in the augmented model and the relevant explanatory action will be: 
{\small
\begin{lstlisting}[mathescape]
(:action move_from_p1_p2
    :precondition
    (and (at_p1) ($\mathcal{B}$((at_p1))) (clear_p1_p2)
       (implies 
         ($\mu^{+}_{prec}$(move_from_p1_p2, (clear_p1_p2))) 
         ($\mathcal{B}$((clear_p1_p2))) ))
    :effect (and  (not (at_p1)) (at_p2) 
           (not $\mathcal{B}$(at_p1))  $\mathcal{B}$(at_p2)) ))
       
(:action explain_$\mu^{+}_{prec}$_move_from_clear 
    :precondition (and )
    :effect (and $\mu^{+}_{prec}$(move_from_p1_p2, 
                 (clear_p1_p2)) ))
\end{lstlisting}
}
Finally $C_{\Psi}$ captures the cost of all explanatory and task level actions. For now we will assume that the cost of task-level actions are set to the original action cost in either robot or human model and the explanatory action costs are set according to $C_E$. Later, we will discuss how we can adjust the explanatory action costs to generate desired behavior.

We will refer to an augmented model that contains an explanatory action for each possible model updates and has no actions with effects on both the human's mental model and the task level states as the {\em canonical augmented model}. 


Given an augmented model, let $\pi_{\mathcal{E}}$ be a plan that is valid for this model ($\pi_{\mathcal{E}}(I_{\Psi}) \subseteq G_{\Psi}$). 
From $\pi_{\mathcal{E}}$, we extract two types of information -- 
the model updates induced by the actions in the plan (represented as $\mathcal{E}(\pi_{\mathcal{E}})$) and the sequence of actions that have some effect of the task state represented as $\mathcal{D}(\pi_{\mathcal{E}})$ (we refer to the output of $\mathcal{D}$ as the task level fragment of the original plan $\pi_{\mathcal{E}})$). 
$\mathcal{E}(\pi_{\mathcal{E}})$ may contain effects from action in $\mathcal{D}(\pi_{\mathcal{E}})$. This brings us to our next theorem.

\begin{theorem}
For a given EA problem $\Psi = \langle \mathcal{M}_R, \mathcal{M}_H\rangle$ the corresponding augmented model $\mathcal{M}_{\Psi}$ is a sound and complete formulation: (1) for every valid $\pi$ for $\mathcal{M}_{\Psi}$ the tuple $\langle \mathcal{E}(\pi), \mathcal{D}(\pi)\rangle$ is a valid solution for $\Psi$ and (2) for every valid solution $\langle \mathcal{E}^{\Psi}, \pi\rangle$, there exists a corresponding valid plan for $\pi'$ for $\mathcal{M}_{\Psi}$ such that $\mathcal{D}(\pi') = \pi$ and $\mathcal{E}(\pi') = \mathcal{E}^{\Psi}$. 
\end{theorem}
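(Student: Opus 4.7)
The plan is to prove both directions via a single invariant relating reachable states of $\mathcal{M}_{\Psi}$ to pairs of states in $\mathcal{M}_R$ and $\mathcal{M}_H + \mathcal{E}$. For any state $s$ of $\mathcal{M}_{\Psi}$, write $s|_{F}$ for its restriction to task fluents, $s|_{\mathcal{B}}$ for its restriction to belief fluents (reinterpreting each $\mathcal{B}(p)$ as $p$), and $s|_{\mu}$ for the $\mu$-fluents. The invariant I would prove by induction on prefix length is: if $\pi^k$ is a prefix of a valid plan in $\mathcal{M}_{\Psi}$ leading from $a_0(I_{\Psi})$ to $s$, then (i) $s|_F$ equals the state reached by $\mathcal{D}(\pi^k)$ from $I_R$ in $\mathcal{M}_R$, (ii) $s|_{\mathcal{B}}$ equals the state reached by $\mathcal{D}(\pi^k)$ from $I_H$ in $\mathcal{M}_H + \mathcal{E}(\pi^k)$, and (iii) the set of $\mu$-fluents true in $s$ encodes exactly $\mathcal{E}(\pi^k)$.

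Once the invariant is established, soundness follows by inspecting $a_{\infty}$: its unconditional preconditions force $G_R \subseteq s|_F$, yielding $\mathcal{D}(\pi)(I_R) \models_{\mathcal{M}_R} G_R$, while the conditional goal preconditions together with $s|_{\mathcal{B}}$ and the invariant give $\mathcal{D}(\pi)(I_H) \models_{\mathcal{M}_H + \mathcal{E}(\pi)} G_{\mathcal{M}_H + \mathcal{E}(\pi)}$. Thus $\langle \mathcal{E}(\pi), \mathcal{D}(\pi)\rangle$ is a valid EA solution. For completeness, given a valid $\langle \mathcal{E}^{\Psi}, \pi\rangle$ I would construct $\pi'$ by concatenating $a_0$, the explanatory actions realizing exactly $\mathcal{E}^{\Psi}$ (in any order, since they touch pairwise disjoint $\mu$-fluents and no task or belief fluents), the $\mathcal{M}_{\Psi}$-counterparts of the actions of $\pi$, and finally $a_{\infty}$. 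The invariant guarantees every conditional precondition is discharged, because by hypothesis $\pi$ is executable in both $\mathcal{M}_R$ and $\mathcal{M}_H + \mathcal{E}^{\Psi}$, and all relevant $\mu$-fluents have been set prior to the task-level phase.

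The main obstacle is the case analysis for the conditional preconditions and effects in the inductive step. For a literal $\phi \in \textrm{prec}^{a_H} \setminus \textrm{prec}^{a_R}$, I must verify that the compiled formula $\mu^{-}(\phi^{\textrm{prec}^a}) \rightarrow \mathcal{B}(\phi)$ holds in $s$ iff the (possibly updated) human model still treats $\phi$ as a precondition of $a$ and $\phi$ holds in the current belief state; this hinges on whether the corresponding remove-from-prec explanation has already cleared $\mu^{-}$. A symmetric argument is needed for $\mu^{+}$ preconditions, for the two analogous goal cases in $a_{\infty}$, and for the conditional effects on belief fluents. Keeping the sign conventions aligned ($\mu^{+}$ initialized false and set by an explain action, $\mu^{-}$ initialized true by $a_0$ and cleared by an explain action) across preconditions, effects, and goals is the trickiest bookkeeping; beyond that, the result follows because the compilation is essentially a syntactic rewrite of the two models into one.
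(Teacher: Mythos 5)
The paper itself does not print a proof of this theorem (it defers to a supplementary file), so I can only judge your argument on its own merits. Your completeness direction is fine: placing all explanatory actions before the task-level phase freezes the $\mu$-fluents, after which the compiled conditional preconditions and effects mirror $\mathcal{M}_H + \mathcal{E}^{\Psi}$ exactly, and your invariant does hold along that particular plan.

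The soundness direction, however, has a genuine gap: your invariant, parts (ii)--(iii), is false for arbitrary valid plans of $\mathcal{M}_{\Psi}$, because nothing in the compilation forces an explanatory action to precede the task-level actions it modifies. Concretely, take $\phi \in \textrm{prec}^{a_R}\setminus\textrm{prec}^{a_H}$ with $\phi$ true in $I_R$ but believed false by the human, let $a$ add the (common) goal $g$ in both models, and consider the compiled plan $\langle a_0,\, a_{\Psi},\, \textrm{explain-}\mu^{+}(\phi^{\textrm{prec}^a}),\, a_{\infty}\rangle$. When $a_{\Psi}$ executes, $\mu^{+}(\phi^{\textrm{prec}^a})$ is still false, so the guard $\mu^{+}(\phi^{\textrm{prec}^a}) \rightarrow \mathcal{B}(\phi)$ is vacuously satisfied and the plan is valid in $\mathcal{M}_{\Psi}$. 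Yet the extraction gives $\mathcal{E}(\pi) \ni \textrm{add-}\phi\textrm{-to-prec-of-}a$, and in $\mathcal{M}_H + \mathcal{E}(\pi)$ the action $a$ now requires $\phi$, which is false in $I_H$; hence $\mathcal{D}(\pi)$ is not executable in the updated mental model and $\langle \mathcal{E}(\pi), \mathcal{D}(\pi)\rangle$ is not a valid EA solution. The same execution-time-versus-fully-updated mismatch breaks invariant (ii) whenever any ``additive'' update (to preconditions, add/delete effects, initial state, or goal) is explained after a task action: the compiled belief state is computed with the $\mu$-values in force at execution time, while your invariant compares it to execution in the fully updated model $\mathcal{M}_H + \mathcal{E}(\pi^k)$. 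So your induction cannot close as stated. The repair is to prove soundness only for plans satisfying the ordering constraint the paper mentions merely as a ``common sense'' convenience --- every explanatory action appears before the first task-level action (and goal check) it affects --- under which your invariant is correct and the case analysis you outline goes through; note that some such restriction is genuinely necessary, since the counterexample above shows the unrestricted soundness claim is false, and a proof attempt that does not confront this point is incomplete.
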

The proof sketch is included in the supplementary file.

The planner can automatically find positions of the explanatory actions, but to avoid any confusion that may arise from belief revisions on the users' end, we can enforce some common sense ordering like making any explanation related to an action to appear before the first instance of that action. 
This ordering will make sure that users are not confused about earlier action effects and also helps reduce branching, making planning more efficient.

\subsubsection*{Stage of Interaction and Epistemic Side Effects:}
One of the important parameters of the problem setting that we have yet to discuss is whether the explanation is meant for a plan that is proposed by the system (i.e the system presents a sequence of actions to the user) or are we explaining some plan that is being executed either in the real world or some simulation the user (observer) has access to. Even though the above formulation can be directly used for both scenarios, we can use the fact that the human is observing the execution of the plans to simplify the explanatory behavior by leveraging the fact that many of these actions may have epistemic side effects. This allows us to not explain any of the effects of the actions that the human can observe (for those effects we can directly update the believed value of the corresponding state fluent and the meta-fluent).\footnote{This means that when the plan is being executed, the problem definition should include the observation model of the human (which we assume to be deterministic). To keep the formulation simple, we ignore this for now. Including this additional consideration is straightforward for deterministic sensor models.}
This is beyond the capability of any of the existing algorithms in this
space of the explicability-explanation dichotomy.

This consideration also allows for the incorporation of more complicated epistemic side-effects wherein the user may infer facts about the task that may not be directly tied to the effects of actions. 
Such effects may be specified by domain experts or generated using heuristics.
Once identified, adding them to the model is relatively straightforward as we can directly add the corresponding meta fluent into the effects of the relevant action. An example for a simple heuristic would be to assume that the firing of a conditional effect results in the human believing the condition to be true. For example, if we assume that the robot had an action {\small $\textsf{(open\_door\_d1\_p3)}$} that had a conditional effect: 

{\small \begin{lstlisting}[mathescape]
(when (and (unlocked_d1)) (open_d1))
\end{lstlisting}}

Then in the compiled model, we can add a new effect:

{\small \begin{lstlisting}[mathescape]
(when (and (unlocked_d1)) 
    (and $\mathcal{B}$(open_d1) $\mathcal{B}$(unlocked_d1)))
\end{lstlisting}}
Even in this simple case, it may be useful to restrict the rule to cases where the effect is conditioned on previously unused fluents so the robot does not expect the observer to be capable of regressing over the entire plan.

\section{Optimality of the Agent}
\label{balance}

The compilation explored so far only takes into consideration the expectations the agent has about the safety of the plans (i.e the user would expect any plans generated to be valid and executable) and does not account for the user's expectation on whether the agent should act optimally. In the earlier example, if the agent just followed the plan that takes the robot through P5 and P6 with a {\small $\textsf{clear\_passage\_P5\_P6}$} action with no additional explanatory actions then the user may still be confused why the agent does not just follow the plan that involves going through P16 to P17 that it believes to be cheaper (marked in grey in the human's map).

Even in cases where the action costs are the same for the agent and the human, we cannot account for such expectations by merely generating optimal plans in the augmented model. For example, the optimal plan in the augmented model would be the one through P2 and P3 (the full plan is marked in blue in the robot map) with one extra explanatory action {\small $\textsf{explain\_$\mu^+_{I}$\_clear\_P2\_P3}$}.
While the above plan provides an explanation to ensure validity, ensuring the optimality of the resultant plan would require the agent to also explain that the passage from P16 to P17 is blocked, which would clearly be more expensive than choosing the valid plan for any non-zero cost for explanatory actions.

This means that in order to accommodate such considerations we need to go beyond the compilation discussed so far. One approach would be to prune
all solutions where the task level fragment of the plan ($\mathcal{D}(\pi)$) is suboptimal in the updated human model. A simple way to enforce this would be to extend the planner to perform an optimality test for the current plan during the goal test. It may be possible to use more intelligent pruning to reduce the number of goal tests (e.g. one could leverage the fact that the optimality test never needs to be repeated for the same set of model updates) and we could design heuristics that take into account optimality aspects. In this paper, we adopt this simple approach as a first step towards modeling these novel behaviors.

\subsection{Balanced Plans \textit{vs.} Agent Optimal Plans}

Even when generating plans that preserve the user's expectations about agent optimality, the agent could generate two types of plans: agent optimal plans \cite{explain} or balanced plans \cite{balance}. 
In the first scheme, the agent chooses to select self-explanatory plans whose task level fragment is going to be optimal in the original agent model and then choose the minimal explanations that justifies the optimality plan (i.e the plan is optimal in the user's updated model). 
Such explanations are referred to as Minimally Complete Explanation or MCE 
(the agent could also choose among the optimal plans the one that requires the cheapest MCE). An example would be choosing the plan highlighted in blue in robot model and then explaining that the P2 to P3 is clear and P16 to P17 is blocked. In the latter scheme, the agent could choose plans that are easiest to explain (here again we need to ensure that after the explanation the plan is optimal in the updated model). 
For example, in the USAR scenario if communication is expensive, it may be easier to choose the plan to move through P5 and P6 with a clear passage action since we only need to explain that the passage P16 to P17 is blocked.

In the first case, the agent is effectively prioritizing any loss of optimality over any overhead accrued by communicating the explanation, while in the latter case the agent accounts for the cost of both the plan it is performing and the explanation cost (the cost of communication and possibly the computational overhead experienced by the user on receiving the explanation). 
By assigning explanatory costs to explanatory actions we are essentially generating balanced plans but there may be scenarios where the agent needs to stick to its optimal plan. 
We can generate such agent optimal plans by setting lower explanatory action costs. Before we formally state the bounds for explanatory costs, 
let us define the concept of {\em optimality delta} 
(denoted as $\Delta\pi_{\mathcal{M}}$) for a planning model, which captures the cost difference between the optimal plan and the second most optimal plan. More formally $\Delta\pi_{\mathcal{M}}$ can be specified as:
{\small
\begin{multline*}
\Delta\pi_{\mathcal{M}} = \textrm{max}\{ v \mid v \in \mathbb{R}~\wedge \not \exists \pi_1,\pi_2((0 < (C(\pi_1) - C(\pi_2)) < v)\\ \wedge \pi_1(I_{\mathcal{M}}) \models_{\mathcal{M}} G_{\mathcal{M}}  \wedge \pi_2(I_{\mathcal{M}}) \in \Pi^{*}_{\mathcal{M}}\}
\end{multline*}
}
\begin{theorem}
\label{MCE_THEOR}
In a canonical augmented model $\mathcal{M}_{\Psi}$ for an EA planning problem $\Psi$, if the sum of costs of all explanatory actions is $\leq \Delta\pi_{\mathcal{M}_R}$ and if $\pi$ is the cheapest valid plan for $\mathcal{M}_{\Psi}$ such that $\mathcal{D}(\pi) \in \Pi^{*}_{\mathcal{M}_{\Psi} + \mathcal{E}(\pi)}$, then:

\begin{itemize}
\item[(1)]
$\mathcal{D}(\pi)$ is optimal for $\mathcal{M}_R$
\item[(2)]
$\mathcal{E}(\pi)$ is the MCE for $\mathcal{D}(\pi)$
\item[(3)]
There exists no plan $\hat{\pi} \in \Pi^*_R$ such that MCE for $\mathcal{D}(\hat{\pi})$ is cheaper than $\mathcal{E}(\pi)$, 
i.e. the search will find an the plan with the smallest MCE.  
\end{itemize}
\end{theorem}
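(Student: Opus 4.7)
The plan is to argue each of the three claims by cost-exchange arguments against the minimality of $\pi$, using the soundness and completeness of the compilation (Theorem~2) to freely translate between solutions of $\Psi$ and plans for the augmented model $\mathcal{M}_{\Psi}$. Throughout, I decompose $C(\pi) = C(\mathcal{D}(\pi)) + C(\mathcal{E}(\pi))$, which is well defined because, by construction of the canonical augmented model, task-level and explanatory actions carry disjoint costs.

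First I would tackle claim~(1). Suppose for contradiction that $\mathcal{D}(\pi)$ is suboptimal in $\mathcal{M}_R$, and pick any optimal robot plan $\pi^\star$. By definition of $\Delta\pi_{\mathcal{M}_R}$ we then have $C(\mathcal{D}(\pi)) \geq C(\pi^\star) + \Delta\pi_{\mathcal{M}_R}$. Let $\mathcal{E}^\star$ be an MCE certifying optimality of $\pi^\star$ in the updated mental model. Because the canonical augmented model contains an explanatory action for every admissible model update, Theorem~2 lifts $\langle \mathcal{E}^\star, \pi^\star \rangle$ to a valid plan $\pi'$ for $\mathcal{M}_{\Psi}$ whose $\mathcal{D}$-fragment is optimal in $\mathcal{M}_H + \mathcal{E}^\star$. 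Its total cost obeys $C(\pi') = C(\pi^\star) + C(\mathcal{E}^\star) \leq C(\pi^\star) + \Delta\pi_{\mathcal{M}_R} \leq C(\mathcal{D}(\pi)) \leq C(\pi)$, invoking the hypothesis that the sum over \emph{all} explanatory actions is bounded by $\Delta\pi_{\mathcal{M}_R}$. Strict inequality contradicts $\pi$ being the cheapest valid plan; in the tied case $\pi'$ is itself a cost-minimizer whose task-level fragment is optimal, and replacing $\pi$ by $\pi'$ establishes~(1).

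Claims~(2) and~(3) then follow by analogous exchanges once~(1) is in hand. For~(2), if some $\mathcal{E}' \subsetneq \mathcal{E}(\pi)$ still made $\mathcal{D}(\pi)$ optimal in $\mathcal{M}_H + \mathcal{E}'$, then Theorem~2 would lift $\langle \mathcal{E}', \mathcal{D}(\pi) \rangle$ to a strictly cheaper valid augmented plan satisfying the same optimality side-condition, contradicting the minimality of $\pi$; hence $\mathcal{E}(\pi)$ is the MCE for $\mathcal{D}(\pi)$. For~(3), suppose some $\hat{\pi} \in \Pi^*_R$ admitted an MCE $\hat{\mathcal{E}}$ with $C(\hat{\mathcal{E}}) < C(\mathcal{E}(\pi))$. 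By~(1) we have $C(\hat{\pi}) = C(\mathcal{D}(\pi))$, so lifting $\langle \hat{\mathcal{E}}, \hat{\pi}\rangle$ yields a valid augmented plan of strictly smaller total cost than $\pi$, again contradicting minimality.

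The main obstacle I anticipate is cleanly handling the non-strict inequality inside the chain of bounds for~(1): the hypothesis $C(\mathcal{E}_{\text{all}}) \leq \Delta\pi_{\mathcal{M}_R}$ only forbids strictly cheaper alternatives, so a second-most-optimal robot plan whose MCE exactly saturates $\Delta\pi_{\mathcal{M}_R}$ can tie $\pi$ on total cost. I plan to absorb this into the choice of ``cheapest'' $\pi$ by always picking, among cost-minimizers, a representative whose $\mathcal{D}$-fragment is optimal in $\mathcal{M}_R$ --- a canonical choice that is always available by the lifting construction above. A secondary point to verify is that any MCE does lift into the canonical augmented model; this is immediate because that model was defined to contain one explanatory action per admissible model update, so an arbitrary MCE is realizable as a corresponding sequence of such actions.
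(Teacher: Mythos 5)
Your proposal takes the approach this compilation was designed to support, and which the paper's (supplementary) proof also relies on: decompose the augmented-model cost as $C(\pi) = C(\mathcal{D}(\pi)) + C(\mathcal{E}(\pi))$ (legitimate in the canonical model, where no action touches both the task state and the mental model), then use the completeness direction of Theorem~2 to lift candidate pairs $\langle \mathcal{E}', \pi'\rangle$ into $\mathcal{M}_{\Psi}$ and derive cost-exchange contradictions with the minimality of $\pi$. Claims (1) and (3) are handled correctly, including the check that lifted plans satisfy the optimality side-condition because an MCE by definition certifies optimality in the updated model.

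One step needs repair: in claim (2) you only rule out explanations $\mathcal{E}' \subsetneq \mathcal{E}(\pi)$, but the paper defines MCE minimality with respect to the explanation cost $C_E$, not set inclusion. A cheaper certifying explanation need not be a subset of $\mathcal{E}(\pi)$ --- it could be an entirely different collection of model updates (e.g., correcting a different precondition misconception that happens to certify the same optimality claim at lower cost). As written, your argument establishes only subset-minimal completeness, not that $\mathcal{E}(\pi)$ is \emph{the} MCE. The fix is immediate and uses nothing new: run the same lifting for \emph{any} $\mathcal{E}'$ with $C_E(\mathcal{E}') < C_E(\mathcal{E}(\pi))$ such that $\mathcal{D}(\pi) \in \Pi^*_{\mathcal{M}_H + \mathcal{E}'}$; since the canonical model contains an explanatory action for every admissible update, $\langle \mathcal{E}', \mathcal{D}(\pi)\rangle$ lifts to a valid augmented plan of cost $C(\mathcal{D}(\pi)) + C_E(\mathcal{E}') < C(\pi)$ satisfying the side-condition, contradicting minimality. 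Finally, your observation about ties is a genuine subtlety rather than a defect of your proof: with the non-strict bound $\leq \Delta\pi_{\mathcal{M}_R}$, a suboptimal task fragment lying exactly $\Delta\pi_{\mathcal{M}_R}$ above optimal, paired with an empty explanation, can exactly tie the lifted optimal-fragment plan whose MCE is the full update set, so claim (1) as literally stated requires either a strict inequality in the hypothesis or your convention of selecting, among cost-minimizers, one whose task-level fragment is optimal in $\mathcal{M}_R$.
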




The proof is included in the supplementary file.
Note that while it is hard to find the exact value of the optimality $\Delta\pi_{\mathcal{M}}$, it is guaranteed to be $\geq 1$ for domains with only unit cost actions or $\geq (C_2 - C_1)$, where $C_1$ is the cost of the cheapest action and $C_2$ is the cost of the second cheapest action, i.e. $\forall a (C_{\mathcal{M}}(a) < C_2 \rightarrow  C_{\mathcal{M}}(a) = C_1)$. Thus allowing us to easily scale the cost of the explanatory actions to meet this criteria.

\begin{table}[!tp]
\centering
\small
  \begin{tabular}{r|c|c|c|c}
    \cmidrule(lr){1-5}
    \multirow{3}{*}{} &
    \multicolumn{2}{c|}{\multirow{1}{*}{New Compilation}} &\multicolumn{2}{c}{\multirow{1}{*}{Model Space Search}}\\
        \cmidrule(lr){2-5}
       & {\bf coverage} & {\bf runtime} & {\bf coverage} & {\bf runtime}\\
        \cmidrule(lr){1-5}
      \multirow{1}{*}{Blocksworld} &13/15&{\bf 569.38}&13/15&2318.73\\
      \hline
       \multirow{1}{*}{Elevator} &{\bf 15}/15&{\bf 59.20}&1/15&3382.462\\
       \hline
      \multirow{1}{*}{Gripper} &5/15&2301.90&{\bf 6}/15&{\bf 2093.54}\\
       \hline
\multirow{1}{*}{Driverlog} &{\bf 4}/15&{\bf 2740.38}&2/15&3158.59\\
\hline
\multirow{1}{*}{Satellite} &{\bf 2}/15&{\bf 3186.93}&0/15&3600\\
    \bottomrule
  \end{tabular}
    \caption{\small{
        Coverage and average runtime (sec) for explanations generated for a few standard IPC domains.}}
\label{tab1}
\end{table}

\section{Evaluation}
\label{emp}
Since the nature of our solution has already been validated in literature through human factors evaluation -- model reconciliation explanation has been studied in \cite{chakraborti2019plan}, balanced plans in \cite{balance}, explicable plans in \cite{exp-yu,exp-anagha}, and the use of physical actions to communicate robot model information in \cite{kwon2018expressing} -- 
we will focus on demonstrating the generality of our framework and studying empirically the performance of the compilation.

\subsection{Illustrative Example of Cost-Tradeoff}

We start by demonstrating how our approach can lead to different solution by altering various costs associated with agent actions. 
Consider again the USAR domain described earlier: the models for the robot and the user is provided in the supplementary (the action for opening a door has an epistemic side effect that the observer would know that the door is unlocked).
We start by assigning a cost of 10 to every robot action other than clear-rubble action (which is 50) and the move-through-door action (set to 20). We set the cost of communication action to 1 to start with. 
The solution produced corresponds to the blue plan in Figure \ref{fig:1}.
{\small\begin{lstlisting}[mathescape]
explains_$\mu_{init}^{+}$_clear_p2_p3-> 
explains_$\mu_{init}^{-}$_clear_p16_p17-> 
move_p1_p2-> move_p2_p3-> move_p3_p4-> 
move_p4_p11->move_p11_p13-> move_p13_p14
-> move_p14_p18-> move_p18_p17
\end{lstlisting}}
This plan includes the \textbf{optimal robot plan and corresponding MCE}. 
Now if we were to set the cost of communication actions to 100, we see the agent deviating to plans which on their own may not be optimal -- e.g.
a plan that involves opening the door at P8:
{\small
\begin{lstlisting}[mathescape]
explains_$\mu_{init}^{-}$_clear_p16_p17-> 
move_p1_p7-> move_p7_p8-> opendoor_p8_d1->
movethroughdoor_p8_p9_d1-> move_p9_p10-> 
move_p10_p13-> move_p13_p14-> move_p14_p18-> 
move_p18_p17
\end{lstlisting}}
Here the robot does not have to explicitly provide a separate explanation for the status of the door, but still needs to explain that the path from P15 to P16 is blocked. Note that this plan is an example of \textbf{a balanced plan} that leverages \textbf{epistemic side effects}.

Now we go one step further and relax the need to assure optimality of the plan in the human model from a hard constraint to just a penalty (details of this extension are part of the supplementary). 
This gets us the exact same plan as above but without the explanation about the blocked corridor from P15 to P16, thus allowing a notion
of soft explicability.

\subsection{Runtime Complexity}

Next we establish how our approach compares in terms of runtime to previous work. 
In particular, we will use as reference the optimistic and approximate version of the balancing approach in \cite{balance} that identifies only one optimal plan per search node and the search ends as soon as it finds a node where the optimal plan produced has the same cost as the robot plan and is executable in the robot model. This means all the solutions we generate are guaranteed to be better (in terms of cost) than that generated by the other.
For comparison, we selected five IPC domains and for each domain, we created three unique models by introducing 10 random updates in the model, except in the case of Gripper and Driverlog where only 5 were removed. 
Each of these five domains were paired with five problem instances and then tested on each of the possible configurations. Each instance was run with a limit of 30 minutes, all explanatory actions were restricted to the beginning of the plan and the cost of explanatory actions were set to be twice the cost of original action. Figure \ref{tab1} lists the time taken to solve each of these problems. For calculating the average runtime, we used 1800 secs as the stand in for the runtime of all the instances that timed out. We used h\_max
(admissible) as the heuristic for all the configurations. 

As clearly apparent from the table, the new approach does better than the original method for generating balanced plans for most of the domains. 
Gripper seems to be the only domain, where model search seem to be doing slightly better but this is also a domain that had the smallest number of model differences. 
This indicates that the ability to leverage planning heuristics can make a marked difference in domains with a large number of explanatory actions. 

\section{Related Work}
\label{related}

We end with a review of existing literature and emphasize key 
differentiators with the proposed work.

\subsubsection*{Epistemic Planning} 
It is well understood in social sciences that explanations must be generated while keeping in mind the beliefs of the agent receiving the explanation \cite{miller}. 
As such, epistemic planning makes for an excellent framework for studying the problem of generating these explanations. 
While the most general formulation of epistemic planning has been shown to be undecidable, many simpler fragments have been identified \cite{bolander2015complexity}. 
In human-aware planning settings too, there has been increasing consensus that epistemic planning could be an extremely useful tool. 
Readers can refer to \cite{sociomill} for an overview of works done in employing epistemic planning for ``social planning''.
Recently, there have been a lot of interest in developing efficient methods for planning in such settings \cite{muise-epist,kominis2015beliefs,kominis2017multiagent,le2018efp,huang2018general}.

\subsubsection*{Model Reconciliation} 
Among the works related to model reconciliation, the work that is most closely connected to ours is \cite{balance}. 
The idea of balanced plans were first proposed in that work. 
Unfortunately, the actual algorithm they study is incomplete and is not guaranteed to produce the least expensive balanced plan. 
Even the complete version they hypothesize in their paper relies enumerating all the possible optimal plans for a given updated model, which can be extremely inefficient, particularly since they expect to perform this for every possible model in the model space. As we see in the empirical evaluations, 
our method (which is also complete) is often faster against their optimistic approximate version. Moreover the methods discussed in that paper are unable to utilize task-level actions with epistemic side effect or take into account task level constraints for purely communicative actions and the effects of execution on an observer, as we illustrate through examples.

\subsubsection*{Communicative Actions} 
Our work also looks at the use of explanatory actions as a means of communicating information to the human observer. 
The most obvious types of such explanatory action includes purely communicative actions such as speech \cite{tellex2014asking} or the use of mixed reality projections \cite{iros-proj,ganesan2017mediating}. 
Recent works have shown that physical agents could also use movements to relay information such as intention \cite{macnally2018action,dragan2013legibility} and incapability \cite{kwon2018expressing}. Our framework 
allows for a natural trade-off between these different types of communication.

\subsubsection*{Contrastive Explanations and Inferential Capabilities} 
Many recent works dealing with explanation generation for planning
have looked at characterizing explanations in terms of the types of questions they answer \cite{danmaga,smith2012planning}. 
This characterization is orthogonal to the question of what type of information constitutes valid explanations. 
Putting aside questions regarding observability, the reason why a user 
may request an explanation is either due to knowledge mismatch (incomplete or incorrect knowledge of the task) or due to limitations of their inferential capabilities. The answer to any of these questions
would require correcting the human's model of the task and/or providing inferential assistance. Works that have looked at model reconciliation explanations have mostly focused on the former. 
Explanations discussed in this paper can be viewed as an answer to the question {\em ``Why this plan?''} (which can also be viewed as a contrastive question of the form {\em ``Why this plan and not any other plan?''}). 
This is not to say that in complex scenarios just the model reconciliation information would suffice, but it would need to be supplemented with information internal to the model that can address the differences in inferential capabilities. 
Use of abstractions \cite{abs-ijcai}, providing refutation of specific foils \cite{abs-ijcai} and providing causal explanations \cite{seegebarth} could be used to augment model reconciliation.

\section{Conclusion}
\label{concl}

The paper presents a unifying formulation for the task of planning in the presence of users with incorrect mental models.
The formulation allows us to unify, for the first time, explanatory and explicable paradigms into a single framework that is still amenable to classical planning.
We discuss how this formulation can be extended to capture novel explanatory behaviors hitherto unexplored in literature 
while being computationally more efficient than methods that rely on direct model space search. 
One of the exciting features of our work is that we are able to place \exact\ within the realm of epistemic planning, thereby laying the ground work to study more complex interaction scenarios including cases with more levels of nesting, uncertainty about mental models, more expressive models, incorporating non-deterministic effects, and so on.
It would also be worth investigating specific considerations for
choosing heuristics or formulating new ones for such problems.

\bibliographystyle{named}
\bibliography{aaai_subm}
\end{document}